\newcommand{\nonl}{\renewcommand{\nl}{\let\nl\oldnl}}
\newcommand{\ra}{\rightarrow}
\newcommand{\mb}[1]{\mathbb{#1}}
\newcommand{\mc}[1]{\mathcal{#1}}
\newcommand{\pp}{\mathbb{P}}
\title{\LARGE \bf Information State Embedding in Partially Observable Cooperative Multi-Agent Reinforcement Learning\thanks{Research supported in part by the US Army Research Laboratory (ARL) Cooperative Agreement W911NF-17-2-0196, and in part by the Office of Naval Research (ONR) MURI Grant N00014-16-1-2710. Authors are with the Coordinated Science Laboratory, University of Illinois, Urbana, IL 61801, U.S.A.  Email addresses: \{weichao2, kzhang66, miehling, basar1\}@illinois.edu.}}
\author{Weichao Mao, Kaiqing Zhang, Erik Miehling, Tamer Ba\c{s}ar}
\begin{document}

	\maketitle
	\thispagestyle{empty}
	\pagestyle{empty}
	
	\begin{abstract} 
	Multi-agent reinforcement learning (MARL) under partial observability has long been considered challenging, primarily due to the requirement for each agent to maintain a belief over all other agents' local histories -- a domain that generally grows exponentially over time. In this work, we investigate a partially observable MARL problem in which agents are cooperative. To enable the development of tractable algorithms, we introduce the concept of an \emph{information state embedding} that serves to compress agents' histories. We quantify how the compression error influences the resulting value functions for decentralized control. Furthermore, we propose an instance of the embedding based on recurrent neural networks (RNNs). The embedding is then used as an approximate information state, and can be fed into any MARL algorithm. The proposed \emph{embed-then-learn} pipeline opens the black-box of existing (partially observable) MARL algorithms, allowing us to establish some theoretical guarantees (error bounds of value functions) while still achieving competitive performance with many end-to-end approaches.

	\end{abstract}

\section{Introduction}\label{sec:intro}
Multi-agent reinforcement learning (MARL) is a prominent and widely applicable paradigm for modeling multi-agent sequential decision making under uncertainty, with applications in a wide range of domains including robotics~\cite{duan2012multi}, cyber-physical systems~\cite{wang2016towards}, and finance~\cite{lee2007multiagent}. Many practical problems require agents to make decisions based on only a partial view of the environment and the (private) information of other agents, e.g., intention of pedestrians in autonomous driving settings, or location of surveillance targets in military drone swarm applications. This partial information generally precludes agents from making optimal decisions due to the requirement that each agent maintains a belief over all other agents' local histories -- a domain that, in general, grows exponentially in time \cite{nayyar2014common}.

We consider a partially observable setting, but restrict attention to problems in which the agents are cooperative, that is, they share the same objective.\footnote{This is in contrast with the more challenging general case of non-cooperative agents in which agents may act strategically to achieve their (individual) goals.} Even under this simpler (cooperative) setting, the primary challenge still exists: due to the lack of explicit communication, agents possess noisy and asymmetric information about the environment, yet their rewards depend on the joint actions of all agents. As in the general (non-cooperative) setting, this informational coupling requires agents to maintain a growing amount of information.

Many approaches have been proposed to address this challenge; we roughly categorize them into two classes: \textit{concurrent learning} and \textit{centralized learning}. In concurrent learning approaches, agents learn and update their own control policies simultaneously. However, since reward and state processes are coupled, the environment becomes non-stationary from the perspective of each agent, and hence concurrent solutions do not converge in general~\cite{gupta2017cooperative}. On the other hand, centralized learning approaches, as the name suggests, reformulate the problem from the perspective of a centralized (or virtual) coordinator~\cite{nayyar2013decentralized}. 
Despite its popularity~\cite{dibangoye2016optimally,dibangoye2018learning}, the centralized approach suffers from high computational complexity. A centralized algorithm needs to assign an action to each possible history sequence of the agents, and the cardinality of such sequences grows exponentially over time. In fact, decentralized partially observable Markov decision processes (Dec-POMDPs), an instance of the general decentralized control model, are known to be NEXP-complete~\cite{bernstein2002complexity}.

In this paper, we propose to address the computational issues in the centralized approach by extracting a summary, termed an \textit{information state embedding}, from the history space, and then learning control policies in the compressed embedding space that possess some quantifiable performance guarantee. This procedure, which we term the \emph{embed-then-learn} pipeline, is depicted in Fig.~\ref{fig:pipeline}. In the first stage, an embedding scheme with a quantifiable compression error is extracted from the history space, 
where our metric of compression error, to be defined in Section~\ref{sec:embedding}, favors an embedding with higher predictive ability. 
In the second stage, we learn a policy in this compressed embedding space. We prove how the embedding error propagates over time, and our theoretical results provide an overall performance bound of the policy in terms of the compression error. Therefore, in this paradigm, the cooperative MARL problem can be reduced to one of finding an information state embedding with a small compression error. 

\begin{figure*}[!t]
	\centering\includegraphics[width=.98\linewidth]{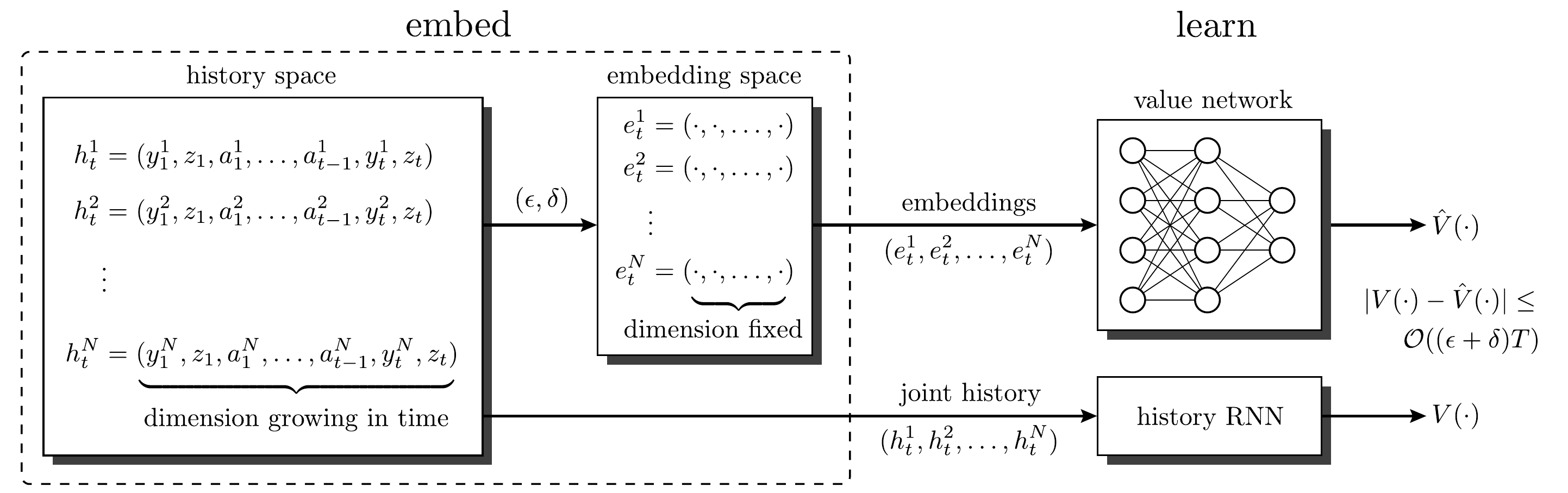}
	\caption{The \emph{embed-then-learn} pipeline. First, the \emph{embed} stage extracts a compact summary, given $\epsilon$ and $\delta$ tolerances (as discussed in Definition \ref{def:embedding}), from the history space, followed by the \emph{learn} stage in which embedding-based policies are learned. Note that the lower path (the history-based policies) is not actually carried out; it is only illustrated to describe the bound.}\label{fig:pipeline}
\end{figure*}

We also introduce an empirical instance of the information state embedding, and demonstrate how to extract an embedding from data. 
Despite the empirical results that \emph{end-to-end} learning\footnote{In deep learning, end-to-end learning generally means training one single neural network that takes in the raw data as input and outputs the ultimate goal task.} generally leads to state-of-the-art performance~\cite{hausknecht2015deep}, our approach breaks a partially observable MARL problem into two stages: embedding followed by learning. It provides a new angle that also enjoys fair performance, while allowing for a more theoretical foundation. This  approach opens the  \emph{black-box} in end-to-end approaches, and  makes an initial step toward understanding their great empirical successes. 

\vspace{5pt}
\noindent\textbf{Related Work.}
In the seminal work~\cite{witsenhausen1973standard}, it was proved that a decentralized problem can be converted to a centralized standard form and hence be solved via a dynamic program. Following this paradigm, the common information approach~\cite{nayyar2013decentralized} formulates the decentralized control problem as a single agent POMDP based on the common information of all the agents. A similar work~\cite{dibangoye2016optimally} transforms the Dec-POMDP into a continuous-state MDP, and again demonstrates that standard techniques from POMDPs are applicable. More recently, \cite{tavafoghi2018unified} has introduced a sufficient information approach that investigates the sufficient conditions under which an information state is optimal for decision making. However, this approach does not offer a constructive algorithm of the sufficient information state. In fact, it is generally difficult, in multi-agent settings, to determine whether an information state that is more compact than the entire history exists. Our work extends this approach, in that we \emph{learn} or \emph{extract} a compact information state \emph{from data} by approximately satisfying these sufficient conditions for optimal decision-making. Perhaps most related to our work is~\cite{dibangoye2014error}, where the authors also studied error propagation for approximations in Dec-POMDPs. Our work differs and complements~\cite{dibangoye2014error} in that~\cite{dibangoye2014error} approximates the Bellman backup operator and the transition matrix, while we focus on the approximation of the information state. 

Learning a state representation that is more compact than an explicit history is also of great interest even in single agent partially observable systems. When the underlying system has an inherent state, as in POMDPs, it is helpful to directly learn a generative model of the system~\cite{ma2017pflstm,moreno2018neural}. When an inherent system state is not available, it is still possible to learn an internal state representation that captures the system dynamics. Two well-known approaches are predictive state representation (PSR)~\cite{littman2002predictive} and causal state representation learning~\cite{zhang2019learning}. Interested readers are referred to~\cite{lesort2018state} for a detailed survey of state representation learning in control. 
A recent work~\cite{subramanian2019approximate} points out that PSR is not sufficient for RL problems, and proposes to extend PSR by introducing a set of sufficient conditions for performance evaluation. 
In comparison, we generalize their analysis to the multi-agent setting, where our work can be regarded as learning a state representation in a \emph{decentralized} partially observable system. 

There is also no lack of empirical/heuristic approaches for MARL under partial observability. In concurrent learning, a learning scheme is proposed in~\cite{banerjee2012sample} where agents take turns to learn the optimal response to others' policies. The authors in~\cite{gupta2017cooperative} extend single-agent RL algorithms to the multi-agent setting and empirically evaluate their performance. Another work~\cite{omidshafiei2017deep} studies multi-task MARL and introduces 
a multi-agent extension of experience replay~\cite{mnih2015human}. A centralized learning algorithm, termed oSARSA, is proposed in~\cite{dibangoye2018learning} to solve a centralized continuous-state MDP formulated from the MARL problem. For a more detailed survey, please refer to~\cite{zhang2019multi}. Compared with those more end-to-end solutions, our embed-then-learn pipeline enables more theoretical analysis. 

\vspace{5pt}
\noindent\textbf{Contribution.} We summarize our contributions as follows: {\bf 1)} We address the computational issue in partially observable MARL, by compressing each agent's local history to an information state embedding that lies in a fixed domain. {\bf 2)} Given the compression error of an embedding, we prove that the value function defined on the embedding space has bounded approximation error. {\bf 3)} We propose to learn an embedding instance using RNNs, and empirically evaluate its performance on some benchmark Dec-POMDP tasks.


\vspace{5pt}
\noindent\textbf{Notation.} We use uppercase letters (e.g., $X$) to denote random variables, lowercase letters (e.g., $x$) for their corresponding realizations, and calligraphic letters (e.g., $\mc{X}$) for the corresponding spaces where the random variables reside. Subscripts denote time indices, whereas superscripts index agents. For $a< b$, the notation $a:b$ denotes $\{a,a+1,\dots, b\}$, e.g., $X_{a:b}$ is a shorthand for $\{X_a, X_{a+1},\dots, X_b\}$, with a similar convention for superscripts. Finally, $\Delta(\mc{X})$ denotes the set of all probability measures over $\mc{X}$.

\section{Model and Preliminaries}\label{sec:preliminaries}
We adopt a model similar to the one developed in \cite{tavafoghi2018unified}, and for completeness we state it here. Consider a team of $N$ non-strategic, i.e., cooperative, agents indexed by $i\in \mathcal{I}=\{1,2,\dots,N\}$ 
over a finite horizon $\mc{T} = \{1, \dots T\}$. 
The state of the environment $X_t\in\mc{X}_t$, the private observation $Y_t^i\in \mc{Y}_t^i$ of agent $i$, and the common observation\footnote{This definition of common observations can be equivalently considered as the innovations defined in the partial history sharing setting~\cite{zhang2019online,nayyar2013decentralized}, where each agent sends a subset of its local history to a shared memory according to a predefined sharing protocol.} of all agents $Z_t\in\mc{Z}_t$, follow the following dynamics:
\[
\begin{aligned}
X_{t+1}&=f_{t}\left(X_{t}, A_{t}^{1:N}, W_{t}^{x}\right),\\
Y_{t+1}^{i}&=l_{t+1}^{i}\left(X_{t+1}, A_{t}^{1:N}, W_{t+1}^{i}\right),\\
Z_{t+1}&=l_{t+1}^{c}\left(X_{t+1}, A_{t}^{1:N}, W_{t+1}^{c}\right),
\end{aligned}
\]
where $X_1, W_t^x, W_t^i$ and $W_t^c$ are independent random variables for all $t\in\mc{T},i\in\mc{I}$. We also assume $\mc{X}_t, \mc{Y}_t^i, \mc{A}_t^i, \mc{Z}_t$ are all finite sets. The initial state $X_1$ follows a fixed distribution $f_X\in\Delta(\mc{X}_1)$. Note that the Dec-POMDP model {of}~\cite{oliehoek2016concise} considers the special case where the agents share no common information, i.e., $\mc{Z}_t$ is empty.

Define common information $C_t = \{Z_{1:t}\} \in \mc{C}_t$ as the aggregate of common observations from time $1$ to $t$. Similarly, let $P_t^i = \{Y_{1:t}^i, A_{1:t-1}^i\} \backslash C_t \in \mc{P}_t^i$ denote each agent's \textit{private information}, assumed to be unknown to the other agents $j\neq i$, where $\mc{P}_t^i$ is the space of agent $i$'s private information at time $t$. Define the joint history $H_t = \{Y_{1:t}^{1:N}, A_{1:t-1}^{1:N}, Z_{1:t}\}\in \mc{H}_t$ to be the collection of all agents' actions and observations up to and including time $t$. Accordingly, define each agent's \textit{local history} $H_{t}^{i}=\left\{Y_{1:t}^i, A_{1:t-1}^{i},Z_{1:t}\right\} \in \mathcal{H}_{t}^{i}$ to be agent $i$'s information at time $t$. 
Under the assumption of perfect recall, each agent's strategy $g_{1:T}^i$ maps its local history to a distribution over its actions, i.e., $g_t^i:\mc{H}_t^i \rightarrow \Delta(\mc{A}_t^i)$. We also refer to $g_t^i$ as agent $i$'s control policy at $t$ and $g_{1:T}^i$ as agent $i$'s control law.


\begin{remark}
	We consider stochastic policies rather than deterministic policies throughout the paper. To understand the reason, consider the single agent case, a standard POMDP. Even though deterministic policies are known to be optimal in POMDPs, their existence relies on knowing the belief state with certainty. In any learning approach, the belief state may not be known with certainty at any time. Any approximation to the true belief state may give rise to stochastic policies that outperform deterministic policies (an extreme case is when the action is based on only the most recent observation as in~\cite{singh1994learning}). Since a Dec-POMDP can be translated to an equivalent (centralized/single-agent) POMDP \cite{nayyar2013decentralized}, the requirement to consider stochastic policies when only an approximate belief state is available carries over to the multi-agent setting as well.
\end{remark}

At each time $t$, all the agents receive a joint reward $R_t(X_t, A_t^{1:N})$. The agents' joint objective is to maximize the total reward over the entire horizon:
$
R(g) = \mb{E}_g\left[\sum_{t=1}^{T}R_t(X_t, A_t^{1:N})\right], 
$
where the expectation is taken with respect to the probability distribution on the states induced by the joint control law $g\triangleq g_{1:T}^{1:N}$. 

It has been shown in~\cite{nayyar2013decentralized} that this decentralized control problem can be formulated as a centralized POMDP from the perspective of a virtual coordinator. The coordinator only has access to the common information $C_t$, not the agents' private information $P_t^{1:N}$. The centralized state $(X_t, P_t^{1:N})\in \mc{X}_t \times \mc{P}_t^{1:N}$ corresponds to the environment state and agents' private information in the decentralized problem. The centralized actions, termed prescriptions (to be defined in Definition~\ref{def:prescriptions}), are mappings from each agent's private information to a local control action, i.e., $\gamma_{t}^i:\mc{P}_t^i \ra \Delta(\mc{A}_t^i)$. The information state in the centralized POMDP is a belief over the environment state and all the agents' private information, conditional on the common information and joint strategies (see e.g., Lemma 1 in~\cite{nayyar2013decentralized}). For optimal decision making, the coordinator needs to maintain a belief over all the agents' private information, the domain of which grows exponentially over time. Therefore, the centralized approach is generally intractable. In the following, we review relevant definitions and structural results from the literature. 
\begin{definition}[Sufficient private and common information~\cite{tavafoghi2018unified}]\label{def:sufficient}
	We say $S_{t}^{i}$, which is a function of $P_{t}^{i}$ and $C_{t}$, denoted by $\zeta_{t}^{i}\left(P_{t}^{i}, C_{t} ; g\right), i \in \mathcal{N}, t \in \mathcal{T}$, is sufficient private information if it satisfies the following conditions:
	\begin{enumerate}[label=(\alph*)]
		\item Recursive update:
		$$S_{t}^{i}=\phi_{t}^{i}\left(S_{t-1}^{i}, H_{t}^{i} \backslash H_{t-1}^{i} ; g\right), \forall t \in \mathcal{T} \backslash\{1\},$$
		\item Sufficient to predict future observations:
		\ifx\singlecolumn\undefined 
		\[
		\begin{aligned} 
		&\mathbb{P}^{g}\{S_{t+1}^{1:N} = s_{t+1}^{1:N}, Z_{t+1} = z_{t+1} | P_{t}^{1:N}, C_{t}, A_{t}^{1:N}\}\\
		&\qquad =\mathbb{P}^{g}\{S_{t+1}^{1:N} = s_{t+1}^{1:N}, Z_{t+1} = z_{t+1} | S_{t}^{1:N}, C_{t}, A_{t}^{1:N}\},
		\end{aligned} 
		\]
		\else 
		\[
		\begin{aligned} 
		\mathbb{P}^{g}\{S_{t+1}^{1:N} = s_{t+1}^{1:N}, Z_{t+1} = z_{t+1} | P_{t}^{1:N}, C_{t}, A_{t}^{1:N}\}=\mathbb{P}^{g}\{S_{t+1}^{1:N} = s_{t+1}^{1:N}, Z_{t+1} = z_{t+1} | S_{t}^{1:N}, C_{t}, A_{t}^{1:N}\},
		\end{aligned} 
		\]
		\fi  
		
		$\forall \{s_{t+1}^{1:N}, z_{t+1}\} \in \mathcal{S}_{t+1}^{1:N} \times \mathcal{Z}_{t+1}$ and $\forall g\in\mc{G}$, 
		\item Sufficient to predict future reward: For any $\tilde{g}$ (strategies defined on the sufficient information space),
		\[
		\mathbb{E}^{\tilde{g}}\left[R_t \mid C_{t}, P_{t}^{i}, A_{t}^{1:N}\right]=\mathbb{E}^{\tilde{g}}\left[R_t \mid C_{t}, S_{t}^{i}, A_{t}^{1:N}\right],
		\]
		\item Sufficient to predict others' private information:
		\[
		\mathbb{P}^{\tilde{g}}\left\{S_t^{-i} = s_{t}^{-i} | P_{t}^{i}, C_{t}\right\}=\mathbb{P}^{\tilde{g}}\left\{S_t^{-i} = s_{t}^{-i} | S_{t}^{i}, C_{t}\right\},
		\]
		$\forall s_t^{-i} \in \mathcal{S}_{t}^{-i}$ and $\forall \tilde{g}\in\tilde{\mc{G}}.$
	\end{enumerate} 
Sufficient common information $\Pi_t$ is defined as the conditional distribution over the environment state $X_t$ and the joint private information $S_t^{1:N}$ of all the agents, given the current common information $C_t$ and previous strategies $g_{1:t-1}$:
	\[
	\Pi_t=\mb{P}^{g_{1: t-1}}\left(X_{t}, S_{t}^{1:N} \mid C_{t}\right).
	\] 
\end{definition} 
Due to the above definition, an agent can make decisions using only its sufficient private information and common information. Such a decision making strategy $\sigma_t^i:\Delta(\mc{X}_t\times \mc{S}_t^{1:N})\times \mc{S}_t^i \rightarrow \Delta(\mc{A}_t^i)$ is termed a sufficient information based (SIB) strategy for agent $i$ at time $t$. 


Let $\eta^{g_{1:t-1}}:\mc{C}_t \rightarrow \Delta(\mc{X}_t \times \mc{S}_t^{1:N})$ denote the mapping from common information to sufficient common information, i.e., $\Pi_t = \eta^{g_{1:t-1}}(C_t)$. The authors of \cite{tavafoghi2018unified} showed that the SIB belief $\Pi_t$ can be updated recursively via Bayes' rule, using only the previous common belief $\Pi_{t-1}$ and the new common observation $Z_t$. That is, there exists a mapping $\psi_t^{\sigma_{t-1}}:\Delta(\mc{X}_{t-1}\times \mc{S}_{t-1}^{1:N}) \times \mc{Z}_t \ra \Delta(\mc{X}_t \times \mc{S}_t^{1:N})$, such that $\Pi_{t}=\psi_{t}^{\sigma_{t-1}}\left(\Pi_{t-1}, Z_{t}\right)$. 
When the belief $\Pi_t$ is fixed, the SIB strategy  $\sigma_t^i(\Pi_t, S_t^i)$ only depends on $S_t^i$. This induces another function, termed prescription. 
\begin{definition}[Prescriptions~\cite{nayyar2013decentralized}]\label{def:prescriptions}
	A \textit{prescription} $\gamma_t^i: \mc{S}_t^i \rightarrow \Delta(\mc{A}_t^i)$ is a mapping from agent $i$'s sufficient private information to a distribution over actions at time $t$. 
\end{definition}

According to Theorem 3 in~\cite{tavafoghi2018unified}, given perfect information of the system dynamics (i.e., $f_t,l_{t}^i$, and $l_{t}^c$), the optimal planning  solution to the decentralized control problem can be found via the following dynamic program:
\begin{equation}
V_{T+1}\left(\pi_{T+1}\right)=0, \quad \forall \pi_{T+1}\in\mathit{\Pi}_{T+1},
\end{equation}
and at every $t\in \mc{T}$,
\ifx\singlecolumn\undefined 
\begin{align}
V_{t}\left(\pi_{t}\right) &=\underset{\gamma_t^{1: N}: \mc{S}_{t}^{1: N}\ra \Delta(\mc{A}_t^{1:N})}{\max } Q_t(\pi_t, \gamma_t^{1:N}),\forall \pi_t \in\mathit{\Pi}_t,\nonumber\\
Q_t(\pi_{t}, \gamma_t^{1:N} ) &=\mathbb{E}  \Big[R_{t}(X_{t}, \gamma_t^{1: N}(S_{t}^{1: N})) \nonumber\\
& \hspace{-0.6cm}+V_{t+1}(\psi_{t+1}^{\gamma_{t}}(\pi_{t}, Z_{t+1}))\Big| \Pi_t = \pi_t \Big],\forall \pi_t \in\mathit{\Pi}_t.\label{eqn:dp}
\end{align}
\else 
\begin{align}
V_{t}\left(\pi_{t}\right) &=\underset{\gamma_t^{1: N}: \mc{S}_{t}^{1: N}\ra \Delta(\mc{A}_t^{1:N})}{\max } Q_t(\pi_t, \gamma_t^{1:N}),\forall \pi_t \in\mathit{\Pi}_t,\nonumber\\
Q_t(\pi_{t}, \gamma_t^{1:N} ) &=\mathbb{E}  \Big[R_{t}(X_{t}, \gamma_t^{1: N}(S_{t}^{1: N})) +V_{t+1}(\psi_{t}^{\gamma_{t}}(\pi_{t}, Z_{t+1}))\Big| \Pi_t = \pi_t \Big],\forall \pi_t \in\mathit{\Pi}_t.\label{eqn:dp}
\end{align}
\fi  
In a learning problem, the system model is unknown to the agents. Agents must infer this information through interaction with the environment. The remainder of the paper is devoted to solving the learning problem.

\section{Information State Embedding}\label{sec:embedding}

The definition of sufficient information (see Definition~\ref{def:sufficient}) characterizes a compression of history that is sufficient for optimal decision making. However, it does not offer an explicit way to construct such an information state, nor to learn it from data. In this section, we first define the notion of an \textit{information state embedding}, an approximate version of sufficient information, and analyze the approximation error it introduces. We further provide an explicit algorithm to learn this information state embedding from data. 

\subsection{Information State Embedding: Definition}
Since a Dec-POMDP can be formulated as a centralized POMDP~\cite{nayyar2013decentralized}, it might seem reasonable to apply (single-agent) state representation techniques, e.g., ~\cite{subramanian2019approximate}, directly to the centralized problem to learn an appropriate information state embedding. However, this is not very helpful because the ``state'' in the centralized POMDP is derived from the common information in the decentralized problem, and hence single-agent state representation techniques would only compress common information. However, the intractability of the decentralized problem arises from the exponential growth of private information. To address the computational bottleneck in the decentralized problems, what we really need is a compact embedding of the agents' private information. 

Let $\hat{S}_t^i$ denote a compression of agent $i$'s sufficient private information $S_t^i$ at time $t$, where the detailed properties of this compression will be clear later.
This compression mapping is denoted by $\alpha_t^i: \mc{S}_t^i\ra \hat{\mc{S}}_t^i$, and we assume that $\alpha_t^i$ is injective\footnote{A function $f:X\hspace{-1mm}\ra\hspace{-1mm} Y$ is injective if $\forall a,b\in X, f(a)=f(b)\Rightarrow a=b.$} for all $i \in \mc{I}, t\in \mc{T}$. It is worth noting that the injectivity assumption is also inherent in the mathematical definition of embeddings in topological spaces (see e.g.,~\cite{sankappanavar1981course}). 
\begin{remark}
	We note that the injectivity assumption does not contradict the fact that $\alpha_t^i$ is a compression. The injective property restricts the cardinalities of the domain and the co-domain, yet compression concerns dimensionality. For computational reasons, we assume throughout the paper that $\hat{\mc{S}}^i$ has a fixed domain.\footnote{A fixed domain is not a theoretical requirement in our analysis, it is just desirable from a computational perspective.} Given its private information embedding, an agent makes decisions using its \textit{embedded strategy} defined as $\hat{g}_t^i: \hat{\mc{S}}_t^i \times \mc{C}_t \ra \Delta(\mc{A}_t^i)$.
\end{remark}

Define the compressed common belief $\hat{\Pi}_t$ to be the conditional distribution over the current environment state $X_t$ and the joint private information embeddings $\hat{S}_t^{1:N}$ of all the agents, given the current common information $C_t$ and previous embedded strategies $\hat{g}_{1:t-1}$, i.e., $\hat{\Pi}_t=\mb{P}^{\hat{g}_{1: t-1}}\left(X_{t}, \hat{S}_{t}^{1:N} \mid C_{t}\right)$. Define the embedded Bayes' rule $\hat{\psi}_t^{\hat{\gamma}_{t-1}}:\Delta(\mc{X}_{t-1}\times \hat{\mc{S}}_{t-1}) \times \mc{Z}_t \ra \Delta(\mc{X}_t \times \hat{\mc{S}}_t)$ analogously. 
Following Definition~\ref{def:prescriptions}, we define the common information compression mapping $\hat{\eta}^{\hat{g}_{1:t-1}}:\mc{C}_t \rightarrow \Delta(\mc{X}_t \times \hat{\mc{S}}_t^{1:N})$ and embedded prescriptions $\hat{\gamma}_t^i: \hat{\mc{S}}_t^i \rightarrow \Delta(\mc{A}_t^i)$ accordingly. 
Analogous to~\eqref{eqn:dp}, we can also define a dynamic program based on our embedded information:
\begin{equation}
\hat{V}_{T+1}\left(\hat{\pi}_{T+1}\right)=0, \quad \forall \hat{\pi}_{T+1}\in\hat{\mathit{\Pi}}_{T+1},
\end{equation}
and for every $\hat{\pi}_t$ at every $t\in \mc{T}$,
\ifx\singlecolumn\undefined 
\begin{equation} \label{eqn:dp_embedded}
\begin{aligned}
\hat{V}_{t}\left(\hat{\pi}_{t}\right) &=\underset{\hat{\gamma}_t^{1: N}
}{\max\ } \hat{Q}_t(\hat{\pi}_t, \hat{\gamma}_t^{1:N}),\\
\hat{Q}_t(\hat{\pi}_{t}, \hat{\gamma}_t^{1:N} ) &=\mathbb{E}  \Big[R_{t}(X_{t}, \hat{\gamma}_t^{1: N}(\hat{S}_{t}^{1: N}))\\
&\qquad +\hat{V}_{t+1}(\hat{\psi}_{t+1}^{\hat{\gamma}_{t}}(\hat{\pi}_{t}, Z_{t+1}))\mid \hat{\Pi}_t = \hat{\pi}_t \Big].
\end{aligned}
\end{equation}
\else 
\begin{equation} \label{eqn:dp_embedded}
\begin{aligned}
\hat{V}_{t}\left(\hat{\pi}_{t}\right) &=\underset{\hat{\gamma}_t^{1: N}: \hat{\mc{S}}_{t}^{1: N}\ra \Delta(\mc{A}_t^{1:N})
}{\max\ } \hat{Q}_t(\hat{\pi}_t, \hat{\gamma}_t^{1:N}),\forall \hat{\pi}_t \in\mathit{\hat{\Pi}}_t\\
\hat{Q}_t(\hat{\pi}_{t}, \hat{\gamma}_t^{1:N} ) &=\mathbb{E}  \Big[R_{t}(X_{t}, \hat{\gamma}_t^{1: N}(\hat{S}_{t}^{1: N}))+\hat{V}_{t+1}(\hat{\psi}_{t}^{\hat{\gamma}_{t}}(\hat{\pi}_{t}, Z_{t+1}))\mid \hat{\Pi}_t = \hat{\pi}_t \Big],\forall \hat{\pi}_t \in\mathit{\hat{\Pi}}_t.
\end{aligned}
\end{equation}
\fi  
To quantify the performance of any such information-embedding, we formally define an  $(\epsilon, \delta)$-information state embedding as follows.

\begin{definition}[$(\epsilon, \delta)$-information state embedding]\label{def:embedding}
	We call $\{\hat{S}^{1:N}_t, \hat{\Pi}_t\}$ an $(\epsilon,\delta)$-\emph{information state embedding} if it satisfies the following two conditions:
	\begin{enumerate}[label=(\alph*)]
		\item Approximately sufficient to predict future rewards:
		For any $t\in \mc{T}$ and any realization of sufficient private information $s_t^{1:N}$, common information $c_t$, and actions $a_t^{1:N}$:
		\ifx\singlecolumn\undefined 
		\[
		\begin{aligned} 
		\big| \mb{E}[ R_t(X_t, &a_t^{1:N}) \mid  \pi_t, s_t^{1:N}, a_t^{1:N}]\\
		&-\mb{E}[ R_t(X_t, a_t^{1:N}) \mid \hat{\pi}_t, \hat{s}_t^{1:N}, a_t^{1:N}] \big|\leq \epsilon.
		\end{aligned} 
		\]
		\else 
		\[
		\begin{aligned} 
		\big| \mb{E}[ R_t(X_t, a_t^{1:N}) \mid  \pi_t, s_t^{1:N}, a_t^{1:N}]-\mb{E}[ R_t(X_t, a_t^{1:N}) \mid \hat{\pi}_t, \hat{s}_t^{1:N}, a_t^{1:N}] \big|\leq \epsilon.
		\end{aligned} 
		\]
		\fi  
		\item Approximately sufficient to predict future beliefs: 
		For any Borel subset $B \subseteq \Delta(\mc{X}_{t+1}\times \hat{\mc{S}}_{t+1})$, define	
		\[
		\begin{aligned} 
		\mu_t(B;a_t^{1:N}) &= \mb{P}\left( \hat{\Pi}_{t+1} \in B \mid  \pi_t,s_t^{1:N}, a_t^{1:N} \right),\\
		\nu_t(B;a_t^{1:N}) &= \mb{P}\left( \hat{\Pi}_{t+1} \in B \mid \hat{\pi}_t, \hat{s}_t^{1:N}, a_t^{1:N}  \right). 
		\end{aligned} 
		\]
		Then 
		\[
		\mc{K}\left(\mu_t(a_t^{1:N}),\nu_t(a_t^{1:N})\right) \leq \delta,
		\]
		where $\mc{K}$ denotes the Wasserstein or Kantorovich-Rubinstein distance between two distributions. 
	\end{enumerate}
\end{definition}

By Kantorovich-Rubinstein duality~\cite{edwards2011kantorovich}, Definition~\ref{def:embedding} suggests $\left|\int f d \mu_t-\int f d \nu_t\right| \leq \delta$ for any Lipschitz continuous function $f$ with Lipschitz constant $\|f\|_{Lip} \leq 1$ (with respect to the Euclidean metric). To obtain an error bound on the value function, we make the following assumption: 

\begin{assumption}[Lipschitz continuity of value functions]\label{asm:lipschitz}
	Value functions $\hat{V}_t:\Delta(\mc{X}_{t}\times \hat{\mc{S}}_{t}^{1:N}) \ra \mb{R}$ are Lipschitz continuous for all $t \in \mc{T}$ with Lipschitz constant upper bound $L_V$, i.e., $\|\hat{V}(\hat{\pi}_t) - \hat{V}(\hat{\pi}_t^\prime)\|_2 \leq L_V \|\hat{\pi}_t - \hat{\pi}_t^\prime \|_2$. 
\end{assumption}

We note that Lipschitz continuity over the compressed common information space is a mild assumption. This is because by centralizing the problem as a single agent POMDP, the value function is piecewise linear convex over the belief state~\cite{sondik1971optimal}, which is Lipschitz continuous over the (non-compressed) common information space. 

\begin{corollary}\label{remark}
Combining Definition~\ref{def:embedding}(b) and Assumption~\ref{asm:lipschitz}, we know that for any realization $c_t,s_t^{1:N}$ and $a_t^{1:N}$, $\forall t\in\mc{T}$:
\ifx\singlecolumn\undefined 
\[
\begin{aligned} 
\Big|\mb{E}[ \hat{V}(\hat{\Pi}_{t+1})\mid & \pi_t ,s_t^{1:N} , a_t^{1:N} ]\\
& -\mb{E}[ \hat{V}(\hat{\Pi}_{t+1})\mid \hat{\pi}_t ,\hat{s}_t^{1:N} , a_t^{1:N} ] \Big| \leq L_V\delta.
\end{aligned} 
\]
\else 
\[
\begin{aligned} 
\Big|\mb{E}[ \hat{V}(\hat{\Pi}_{t+1})\mid \pi_t ,s_t^{1:N} , a_t^{1:N} ] -\mb{E}[ \hat{V}(\hat{\Pi}_{t+1})\mid \hat{\pi}_t ,\hat{s}_t^{1:N} , a_t^{1:N} ] \Big| \leq L_V\delta.
\end{aligned} 
\]
\fi  
\end{corollary}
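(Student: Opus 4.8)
The plan is to recognize that each conditional expectation appearing in the statement is exactly an integral of $\hat{V}$ against one of the two measures $\mu_t$ and $\nu_t$ introduced in Definition~\ref{def:embedding}(b), and then to invoke the Kantorovich--Rubinstein duality that was already recorded immediately after that definition. Concretely, $\hat{\Pi}_{t+1}$ is a $\Delta(\mc{X}_{t+1}\times\hat{\mc{S}}_{t+1})$-valued random element whose conditional law given $(\pi_t,s_t^{1:N},a_t^{1:N})$ is $\mu_t(\cdot\,;a_t^{1:N})$ and whose conditional law given $(\hat{\pi}_t,\hat{s}_t^{1:N},a_t^{1:N})$ is $\nu_t(\cdot\,;a_t^{1:N})$, by the very definitions of $\mu_t$ and $\nu_t$. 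Hence, by the image-measure (law-of-the-unconscious-statistician) formula,
\[
\mb{E}[\hat{V}(\hat{\Pi}_{t+1})\mid \pi_t,s_t^{1:N},a_t^{1:N}] = \int \hat{V}\, d\mu_t, \qquad \mb{E}[\hat{V}(\hat{\Pi}_{t+1})\mid \hat{\pi}_t,\hat{s}_t^{1:N},a_t^{1:N}] = \int \hat{V}\, d\nu_t,
\]
so the quantity to bound is precisely $\left|\int \hat{V}\, d\mu_t - \int \hat{V}\, d\nu_t\right|$.

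Next I would normalize $\hat{V}$ into a $1$-Lipschitz function so that the duality applies directly. By Assumption~\ref{asm:lipschitz}, $\hat{V}_{t+1}$ is $L_V$-Lipschitz with respect to the Euclidean metric, so $f := \hat{V}_{t+1}/L_V$ satisfies $\|f\|_{Lip}\le 1$. Applying to $f$ the duality statement recorded after Definition~\ref{def:embedding} (namely $\left|\int f\, d\mu_t - \int f\, d\nu_t\right| \le \mc{K}(\mu_t,\nu_t)$ for any $1$-Lipschitz $f$), together with the bound $\mc{K}(\mu_t,\nu_t)\le\delta$ from Definition~\ref{def:embedding}(b), gives $\left|\int f\, d\mu_t - \int f\, d\nu_t\right| \le \delta$. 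Multiplying through by $L_V$ and using linearity of the integral returns $\left|\int \hat{V}\, d\mu_t - \int \hat{V}\, d\nu_t\right| \le L_V\delta$, which, by the identification in the first step, is exactly the claimed inequality.

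The argument is essentially a one-line application of the duality, so I do not anticipate any genuine mathematical difficulty; the only points needing care are bookkeeping. First, one must confirm that the metric underlying the Wasserstein distance $\mc{K}$ in Definition~\ref{def:embedding}(b) coincides with the metric underlying the Lipschitz constant in Assumption~\ref{asm:lipschitz} (both being the Euclidean/$2$-norm on the belief simplex $\Delta(\mc{X}_{t+1}\times\hat{\mc{S}}_{t+1})$), since otherwise $f=\hat{V}_{t+1}/L_V$ would not be an admissible test function in the duality. Second, one should check that $\hat{V}_{t+1}$ is bounded and Borel measurable on the belief space so that the integrals are finite and well-defined; this follows from Lipschitz continuity over the compact simplex. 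Given these, the only step that uses anything beyond the two cited results is the identification of the conditional expectations with $\int\hat{V}\,d\mu_t$ and $\int\hat{V}\,d\nu_t$, and that is immediate from $\mu_t,\nu_t$ being defined as the conditional laws of $\hat{\Pi}_{t+1}$.
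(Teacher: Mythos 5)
Your proposal is correct and matches the paper's intended argument exactly: the paper leaves this corollary unproved precisely because it follows by identifying the two conditional expectations with $\int \hat{V}\,d\mu_t$ and $\int \hat{V}\,d\nu_t$, rescaling $\hat{V}$ by $L_V$ to obtain a $1$-Lipschitz test function, and invoking the Kantorovich--Rubinstein duality statement recorded immediately after Definition~\ref{def:embedding} together with the bound $\mc{K}(\mu_t,\nu_t)\le\delta$. Your bookkeeping remarks (matching metrics, measurability/boundedness on the compact simplex) are sound and consistent with the paper's setup.
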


\subsection{Information State Embedding: Error Analysis}

Next, we extend the approximation error analysis in~\cite{subramanian2019approximate} to the multi-agent setting. Our main result is that, by compressing the exponentially growing history to an $(\epsilon, \delta)$-information state embedding, the error of value functions over the entire horizon is bounded as stated in the theorem below:
\begin{theorem}\label{thm:dpbound}
	For any $t \in \mc{T}$ and any realization $c_t$ and $p_t^{1:N}$, let $\gamma_{t}^{_*1:N}$ and $\hat{\gamma}_{t}^{_*1:N}$ denote the optimal prescriptions in the two dynamic programming solutions~\eqref{eqn:dp} and \eqref{eqn:dp_embedded}, respectively. Then, we have:
	\[
	\begin{aligned} 
	\left| Q_t(\pi_t, \gamma_t^{_*1:N}) - \hat{Q}_t(\hat{\pi}_t, \hat{\gamma}_t^{_*1:N}) \right| &\leq (T-t+1)(\epsilon+L_V\delta),\\
	\left| V_t(\pi_t) - \hat{V}_t(\hat{\pi}_t) \right| &\leq (T-t+1)(\epsilon+L_V\delta).\\
	\end{aligned} 
	\]
\end{theorem}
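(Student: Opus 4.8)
The plan is to prove both inequalities simultaneously by backward induction on $t$, running from $t=T+1$ down to $t=1$. The observation that unifies the two claimed bounds is that, by optimality, $V_t(\pi_t) = Q_t(\pi_t, \gamma_t^{_*1:N})$ and $\hat{V}_t(\hat\pi_t) = \hat{Q}_t(\hat\pi_t, \hat\gamma_t^{_*1:N})$, so the second inequality is literally the first evaluated at the optimal prescriptions; it therefore suffices to control the $Q$-difference at the optima, and the resulting bound on $|V_t(\pi_t) - \hat{V}_t(\hat\pi_t)|$ is exactly what propagates the induction. The base case $t=T+1$ is immediate, since $V_{T+1} = \hat{V}_{T+1} = 0$ gives difference $0 = (T-(T+1)+1)(\epsilon + L_V\delta)$.

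For the inductive step, assume $|V_{t+1}(\pi_{t+1}) - \hat{V}_{t+1}(\hat\pi_{t+1})| \leq (T-t)(\epsilon + L_V\delta)$ for every realization. First I would establish a pointwise-in-prescription bound: for any prescription $\gamma_t^{1:N}$ and its embedded counterpart $\hat\gamma_t^{1:N}$ matched through the injective maps $\alpha_t^i$ (so the two induce identical action distributions on corresponding private states $\hat{s}_t^i = \alpha_t^i(s_t^i)$),
\[
\left| Q_t(\pi_t, \gamma_t^{1:N}) - \hat{Q}_t(\hat\pi_t, \hat\gamma_t^{1:N}) \right| \leq (T-t+1)(\epsilon + L_V\delta).
\]
This follows from a telescoping decomposition of the difference, conditioned on each realization of $(s_t^{1:N}, a_t^{1:N})$, into three pieces: (i) the one-step reward gap $|\mathbb{E}[R_t \mid \pi_t, s_t^{1:N}, a_t^{1:N}] - \mathbb{E}[R_t \mid \hat\pi_t, \hat{s}_t^{1:N}, a_t^{1:N}]|$, bounded by $\epsilon$ via Definition~\ref{def:embedding}(a); (ii) the continuation gap obtained by replacing $V_{t+1}(\Pi_{t+1})$ with $\hat{V}_{t+1}(\hat\Pi_{t+1})$ under the common transition law $\mathbb{E}[\,\cdot \mid \pi_t, s_t^{1:N}, a_t^{1:N}]$, bounded by the induction hypothesis $(T-t)(\epsilon + L_V\delta)$ after pairing the realizations of $\Pi_{t+1}$ and $\hat\Pi_{t+1}$ through the shared $Z_{t+1}$; and (iii) the belief-transition gap $|\mathbb{E}[\hat{V}_{t+1}(\hat\Pi_{t+1}) \mid \pi_t, s_t^{1:N}, a_t^{1:N}] - \mathbb{E}[\hat{V}_{t+1}(\hat\Pi_{t+1}) \mid \hat\pi_t, \hat{s}_t^{1:N}, a_t^{1:N}]|$, bounded by $L_V\delta$ directly through Corollary~\ref{remark}. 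Summing the three terms and taking expectation over $(S_t^{1:N}, A_t^{1:N})$ yields the displayed bound.

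To pass from the pointwise estimate to the optimal prescriptions, I would invoke the elementary inequality $|\max_x f(x) - \max_x g(x)| \leq \sup_x |f(x) - g(x)|$. The injectivity of each $\alpha_t^i$ supplies a bijection between the effective domains of $\gamma_t^{1:N}$ and $\hat\gamma_t^{1:N}$: since $\hat{S}_t^{1:N}$ takes values only in the image $\alpha_t(\mc{S}_t^{1:N})$, the values an embedded prescription assigns outside this image are irrelevant, so maximizing over $\hat\gamma_t^{1:N}$ is equivalent to maximizing over $\gamma_t^{1:N}$ through the correspondence. Applying the max-difference inequality to the uniform pointwise bound then gives $|Q_t(\pi_t, \gamma_t^{_*1:N}) - \hat{Q}_t(\hat\pi_t, \hat\gamma_t^{_*1:N})| \leq (T-t+1)(\epsilon + L_V\delta)$, which is simultaneously the first claimed inequality and, by optimality, the bound $|V_t(\pi_t) - \hat{V}_t(\hat\pi_t)| \leq (T-t+1)(\epsilon + L_V\delta)$ needed to advance the induction.

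I expect the main obstacle to be the bookkeeping that makes the pointwise decomposition rigorous — specifically, reconciling the two expectations over the current private information (the marginal of $S_t^{1:N}$ under $\pi_t$ versus that of $\hat{S}_t^{1:N}$ under $\hat\pi_t$, which are generated by different strategies) so that the per-realization estimates from Definition~\ref{def:embedding} can be averaged without introducing uncontrolled terms, together with correctly aligning $\Pi_{t+1}$ and $\hat\Pi_{t+1}$ across the common observation $Z_{t+1}$ when applying the induction hypothesis. The handling of the $\max$ over prescriptions across the two injectively-related domains is the other delicate point, since it is there that the injectivity assumption on $\alpha_t^i$ is genuinely used.
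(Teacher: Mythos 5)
Your proof is correct and follows essentially the same route as the paper's: backward induction with prescriptions matched through the injective embedding, and the same three-way decomposition into a reward gap ($\epsilon$, via Definition~\ref{def:embedding}(a)), a propagated continuation gap (induction hypothesis, paired through the shared $Z_{t+1}$), and a belief-transition gap ($L_V\delta$, via Corollary~\ref{remark}). The only packaging difference is that you bound the $Q$-gap uniformly over matched prescription pairs and invoke $|\max_x f(x) - \max_x g(x)| \le \sup_x |f(x)-g(x)|$, which delivers both sides of the absolute value at once, whereas the paper constructs a one-sided oracle prescription $\hat{\gamma}_{t}^{_01:N}$ matching $\gamma_{t}^{_*1:N}$ and explicitly writes out only the direction $Q_t(\pi_t,\gamma_t^{_*1:N}) \le \hat{Q}_t(\hat{\pi}_t,\hat{\gamma}_t^{_*1:N}) + (T-t+1)(\epsilon+L_V\delta)$.
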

\begin{proof}
	We prove by backward induction. As the basis of induction, Theorem~\ref{thm:dpbound} holds at time $T+1$ by construction. Suppose Theorem~\ref{thm:dpbound} holds at time $t+1$, $t\leq T$; then, for time $t$, we define an auxiliary set of prescriptions $\hat{\gamma}_{t}^{_01:N}$ that produces exactly the same action distribution as $\gamma_{t}^{_*1:N}$. Specifically, for any $t\in\mc{T}, i\in\mc{I}$, and for any realization of $S_t^i$, this definition implies $\pp \left(a_t^{i}\mid \hat{s}_t^{i},\hat{\gamma}_t^{_0i}\right) = \pp \left(a_t^{i}\mid s_t^{i},\gamma_t^{_*i}\right),\forall a_t^{i}\in \mc{A}_t^{i}$. This is possible because our private information embedding process is assumed to be injective. The existence of such an oracle $\hat{\gamma}_{t}^{_01:N}$ suggests that, given only the embedded information, it is always possible to recover the optimal action distributions that are produced by the complete information. Nevertheless, our embedded-information-based dynamic program ends up generating a different set of prescriptions $\hat{\gamma}_{t}^{_*1:N}$, and hence the oracle $\hat{\gamma}_{t}^{_01:N}$ is only used for analysis purposes. Together with Definition~\ref{def:embedding}(a), the following holds for every realization $c_t$ and $s_t^{1:N}$:
	\ifx\singlecolumn\undefined 
	\begin{equation}\label{eqn:reward}
	\begin{aligned} 
	\big| \mb{E}[ R_t(&X_t, \gamma_t^{_*1:N}(s_t^{1:N})) \mid  \pi_t, s_t^{1:N}, \gamma_t^{_*1:N}] -\\
	&\mb{E}[ R_t(X_t, \hat{\gamma}_t^{_01:N}(\hat{s}_t^{1:N})) \mid \hat{\pi}_t, \hat{s}_t^{1:N}, \hat{\gamma}_t^{_01:N}] \big|\leq \epsilon,
	\end{aligned} 
	\end{equation}
	\else 
	\begin{equation}\label{eqn:reward}
	\begin{aligned} 
	\big| \mb{E}[ R_t(X_t, \gamma_t^{_*1:N}(s_t^{1:N})) \mid  \pi_t, s_t^{1:N}, \gamma_t^{_*1:N}] -\mb{E}[ R_t(X_t, \hat{\gamma}_t^{_01:N}(\hat{s}_t^{1:N})) \mid \hat{\pi}_t, \hat{s}_t^{1:N}, \hat{\gamma}_t^{_01:N}] \big|\leq \epsilon,
	\end{aligned} 
	\end{equation}
	\fi  
	where $\gamma_{t}^{1:N}(s_t^{1:N})$ is a shorthand for $\{\gamma_{t}^{1}(s_t^1),\dots, \gamma_{t}^{N}(s_t^N)\}$. Similarly, by combining the definition of $\hat{\gamma}_{t}^{_01:N}$ with Remark~\ref{remark}, we also have:
	\ifx\singlecolumn\undefined 
	\begin{equation}\label{eqn:belief}
	\begin{aligned} 
	\big| \mb{E}[  \hat{V}(\hat{\Pi}_{t+1}) &\mid  \pi_t, s_t^{1:N}, \gamma_t^{_*1:N}] -\\
	&\mb{E}[  \hat{V}(\hat{\Pi}_{t+1}) \mid \hat{\pi}_t, \hat{s}_t^{1:N}, \hat{\gamma}_t^{_01:N}] \big|\leq L_V \delta.
	\end{aligned} 
	\end{equation}
	\else 
	\begin{equation}\label{eqn:belief}
	\begin{aligned} 
	\big| \mb{E}[  \hat{V}(\hat{\Pi}_{t+1}) \mid  \pi_t, s_t^{1:N}, \gamma_t^{_*1:N}] -\mb{E}[  \hat{V}(\hat{\Pi}_{t+1}) \mid \hat{\pi}_t, \hat{s}_t^{1:N}, \hat{\gamma}_t^{_01:N}] \big|\leq L_V \delta.
	\end{aligned} 
	\end{equation}
	\fi  
	To see this, notice that for any Borel subset $B$ of $\Delta(\mc{X}_{t+1}\times \hat{\mc{S}}_{t+1})$, we have:
	\ifx\singlecolumn\undefined 
	\small
	\[
	\begin{aligned} 
	&\mu_t(B;\gamma_t^{_*1:N}) \triangleq\ \mb{P}\left( \hat{\Pi}_{t+1} \in B \mid \pi_t,s_t^{1:N}, \gamma_t^{_*1:N} \right)\\
	=&\hspace{-1mm}\sum_{a_t^{1:N}\in \mc{A}_t^{1:N}}\hspace{-3mm}\mb{P}\left( \hat{\Pi}_{t+1} \in B \mid \pi_t,s_t^{1:N}, a_t^{1:N} \right) \pp \left(a_t^{1:N}\mid s_t^{1:N},\gamma_t^{_*1:N}\right)\\
	=&\hspace{-1mm}\sum_{a_t^{1:N}\in \mc{A}_t^{1:N}}\hspace{-3mm} \mu_t(B)\pp \left(a_t^{1:N}\mid s_t^{1:N},\gamma_t^{_*1:N}\right),
	\end{aligned} 
	\]
	\normalsize
	\else 
	\[
	\begin{aligned} 
	&\mu_t(B;\gamma_t^{_*1:N}) \\
	\triangleq\ &\mb{P}\left( \hat{\Pi}_{t+1} \in B \mid \pi_t,s_t^{1:N}, \gamma_t^{_*1:N} \right)\\
	=&\hspace{-1mm}\sum_{a_t^{1:N}\in \mc{A}_t^{1:N}}\hspace{-3mm}\mb{P}\left( \hat{\Pi}_{t+1} \in B \mid \pi_t,s_t^{1:N}, a_t^{1:N} \right) \pp \left(a_t^{1:N}\mid s_t^{1:N},\gamma_t^{_*1:N}\right)\\
	=&\hspace{-1mm}\sum_{a_t^{1:N}\in \mc{A}_t^{1:N}}\hspace{-3mm} \mu_t(B)\pp \left(a_t^{1:N}\mid s_t^{1:N},\gamma_t^{_*1:N}\right),
	\end{aligned} 
	\]
	\fi  
	and also note that by Kantorovich-Rubinstein duality,
	\ifx\singlecolumn\undefined 
	\small
	\begin{align}
	&\mc{K}\left(\mu_t(B;\gamma_{t}^{_*1:N}),\nu_t(B; \hat{\gamma}_t^{_01:N})\right)\nonumber\\
	=&\sup_{\|f\|_{\text{Lip}}\leq 1} \left|\int fd\mu_t(\gamma_t^{_*1:N}) - \int f d \nu_t(\hat{\gamma}_t^{_01:N})\right|\nonumber\\
	=&\sup_{\|f\|_{\text{Lip}}\leq 1} \Big|\sum_{a_t^{1:N}\in \mc{A}_t^{1:N}} \pp \left(a_t^{1:N}\mid s_t^{1:N},\gamma_t^{_*1:N}\right) \int fd\mu_t(a_t^{1:N})\nonumber\\
	&\quad -\sum_{a_t^{1:N}\in \mc{A}_t^{1:N}}   \pp \left(a_t^{1:N}\mid \hat{s}_t^{1:N},\hat{\gamma}_t^{_01:N}\right) \int f d \nu_t(a_t^{1:N})\Big|\label{eqn:e1}\\
	\leq&\sup_{\|f\|_{\text{Lip}}\leq 1} \sum_{a_t^{1:N}\in \mc{A}_t^{1:N}} \pp \left(a_t^{1:N}\mid s_t^{1:N},\gamma_t^{_*1:N}\right)
	\nonumber\\
	&\quad \times \bigg|\int fd\mu_t(a_t^{1:N}) - \int f d \nu_t(a_t^{1:N})\bigg|.\label{eqn:e2}
	\end{align}
	\normalsize
	\else 
	\begin{align}
	&\mc{K}\left(\mu_t(B;\gamma_{t}^{_*1:N}),\nu_t(B; \hat{\gamma}_t^{_01:N})\right)\nonumber\\
	=&\sup_{\|f\|_{\text{Lip}}\leq 1} \left|\int fd\mu_t(\gamma_t^{_*1:N}) - \int f d \nu_t(\hat{\gamma}_t^{_01:N})\right|\nonumber\\
	=&\sup_{\|f\|_{\text{Lip}}\leq 1} \Big|\sum_{a_t^{1:N}\in \mc{A}_t^{1:N}} \pp \left(a_t^{1:N}\mid s_t^{1:N},\gamma_t^{_*1:N}\right) \int fd\mu_t(a_t^{1:N})\nonumber\\
	&\qquad\qquad\qquad\qquad  -\sum_{a_t^{1:N}\in \mc{A}_t^{1:N}}   \pp \left(a_t^{1:N}\mid \hat{s}_t^{1:N},\hat{\gamma}_t^{_01:N}\right) \int f d \nu_t(a_t^{1:N})\Big|\label{eqn:e1}\\
	\leq&\sup_{\|f\|_{\text{Lip}}\leq 1} \sum_{a_t^{1:N}\in \mc{A}_t^{1:N}} \pp \left(a_t^{1:N}\mid s_t^{1:N},\gamma_t^{_*1:N}\right) \bigg|\int fd\mu_t(a_t^{1:N}) - \int f d \nu_t(a_t^{1:N})\bigg|.\label{eqn:e2}
	\end{align}
	\fi  
	Equality~\eqref{eqn:e1} holds because the probability measures are finite and the coefficients are non-negative. Inequality~\eqref{eqn:e2} follows from the triangle inequality and the fact that \allowbreak $\pp \left(a_t^{1:N}\mid \hat{s}_t^{1:N},\hat{\gamma}_t^{_01:N}\right) = \pp \left(a_t^{1:N}\mid s_t^{1:N},\gamma_t^{_*1:N}\right),\forall a_t^{1:N}\in \mc{A}_t^{1:N}$. Since the supremum of summation is no larger than the summation of suprema:
	\ifx\singlecolumn\undefined 
	\small
	\begin{align} 
	&\mc{K}\left(\mu_t(B;\gamma_{t}^{_*1:N}),\nu_t(B; \hat{\gamma}_t^{_01:N})\right)\nonumber\\
	\leq& \!\!\!\sum_{a_t^{1:N}\in \mc{A}_t^{1:N}} \!\!\!\pp \left(a_t^{1:N}\mid s_t^{1:N},\gamma_t^{_*1:N}\right)
	\nonumber\\
	&\quad\times \sup_{\|f\|_{\text{Lip}}\leq 1} \left|\int fd\mu_t(a_t^{1:N}) - \int f d \nu_t(a_t^{1:N})\right|\nonumber\\
	\leq &\!\!\!\!\!\sum_{a_t^{1:N}\in \mc{A}_t^{1:N}} \!\!\!\!\!\pp \left(a_t^{1:N}\mid s_t^{1:N},\gamma_t^{_*1:N}\right) \mc{K}\left(\mu_t(a_t^{1:N}), \nu_t^{1:N}(a_t^{1:N})\right)\label{eqn:e3}\\
	\leq& \,\,\delta.\label{eqn:e4}
	\end{align}
	\normalsize
	\else 
	\begin{align} 
	&\mc{K}\left(\mu_t(B;\gamma_{t}^{_*1:N}),\nu_t(B; \hat{\gamma}_t^{_01:N})\right)\nonumber\\
	\leq& \sum_{a_t^{1:N}\in \mc{A}_t^{1:N}} \pp \left(a_t^{1:N}\mid s_t^{1:N},\gamma_t^{_*1:N}\right)\sup_{\|f\|_{\text{Lip}}\leq 1} \left|\int fd\mu_t(a_t^{1:N}) - \int f d \nu_t(a_t^{1:N})\right|\nonumber\\
	\leq &\sum_{a_t^{1:N}\in \mc{A}_t^{1:N}} \pp \left(a_t^{1:N}\mid s_t^{1:N},\gamma_t^{_*1:N}\right) \mc{K}\left(\mu_t(a_t^{1:N}), \nu_t^{1:N}(a_t^{1:N})\right)\label{eqn:e3}\\
	\leq& \delta.\label{eqn:e4}
	\end{align}
	\fi  
	Inequality~\eqref{eqn:e3} holds because Kantorovich-Rubinstein duality implies that $\mc{K}(\mu,\nu)$ is the upper bound of any function with Lipschitz constant no larger than $1$, and hence holds for the specific function $f$.  Finally,~\eqref{eqn:e4} is due to Definition~\ref{def:embedding}(b) and the fact that $\sum_{a_t^{1:N}\in \mc{A}_t^{1:N}} \pp \left(a_t^{1:N}\mid s_t^{1:N},\gamma_t^{_*1:N}\right) = 1$. 
	
	Using this oracle $\hat{\gamma}_{t}^{_01:N}$, for any realization of sufficient private information $s_t^{1:N}$ and common information $c_t$ at time $t$, let $\pi_t = \eta(c_t)$ and $\hat{\pi}_t =  \hat{\eta}(c_t)$; then, we have:
	\ifx\singlecolumn\undefined 
	\small 
	\begin{align}
	&Q_t(\pi_t, \gamma_t^{_*1:N})\\ 
	=& \mathbb{E}  \Big[R_{t}(X_{t}, \gamma_t^{_*1: N}(s_{t}^{1: N}))\nonumber\\
	&\qquad+ V_{t+1}(\psi_{t+1}^{\gamma^*_{t}}(\pi_{t}, Z_{t+1}))\Big| \pi_t, s_t^{1:N}, \gamma_t^{_*1:N}\Big]\label{eqn:t1}\\
	=& \mathbb{E}  \Big[R_{t}(X_{t}, \gamma_t^{_*1: N}(s_{t}^{1: N}))+V_{t+1}(\Pi_{t+1})\mid \pi_t, s_t^{1:N}, \gamma_t^{_*1:N}\Big]\label{eqn:t2}\\
	\leq &\mathbb{E}  \Big[R_{t}(X_{t}, \gamma_t^{_*1: N}(s_{t}^{1: N}))+\hat{V}_{t+1}(\hat{\Pi}_{t+1})\mid \pi_t, s_t^{1:N}, \gamma_t^{_*1:N}\Big] \nonumber\\
	&\qquad + (T-t)(\epsilon+L_V\delta).\label{eqn:t3}
	\end{align}
	\normalsize
	\else 
	\begin{align}
	&Q_t(\pi_t, \gamma_t^{_*1:N})\nonumber\\ 
	=& \mathbb{E}  \Big[R_{t}(X_{t}, \gamma_t^{_*1: N}(s_{t}^{1: N}))+ V_{t+1}(\psi_{t+1}^{\gamma^*_{t}}(\pi_{t}, Z_{t+1}))\Big| \pi_t, s_t^{1:N}, \gamma_t^{_*1:N}\Big]\label{eqn:t1}\\
	=& \mathbb{E}  \Big[R_{t}(X_{t}, \gamma_t^{_*1: N}(s_{t}^{1: N}))+V_{t+1}(\Pi_{t+1})\mid \pi_t, s_t^{1:N}, \gamma_t^{_*1:N}\Big]\label{eqn:t2}\\
	\leq &\mathbb{E}  \Big[R_{t}(X_{t}, \gamma_t^{_*1: N}(s_{t}^{1: N}))+\hat{V}_{t+1}(\hat{\Pi}_{t+1})\mid \pi_t, s_t^{1:N}, \gamma_t^{_*1:N}\Big] + (T-t)(\epsilon+L_V\delta).\label{eqn:t3}
	\end{align}
	\fi  
	Equality~\eqref{eqn:t1} is by the definition of $Q_t$ in the dynamic programming. Equality~\eqref{eqn:t2} is by the definition of $\psi$. Inequality~\eqref{eqn:t3} comes from our induction hypothesis. Using the results from Equations~\eqref{eqn:reward} and~\eqref{eqn:belief}, we then have:
	\ifx\singlecolumn\undefined 
	\small  
	\begin{align} 
	&Q_t(\pi_t, \gamma_t^{_*1:N})\\ 
	\leq &\left(\mathbb{E}  \Big[R_{t}(X_{t}, \hat{\gamma}_t^{_01:N}(\hat{s}_{t}^{1: N}))\mid \hat{\pi}_t, \hat{s}_t^{1:N}, \hat{\gamma}_t^{_01:N}\Big]+\epsilon \right)\nonumber\\
	&\qquad +\left(\mathbb{E}  \Big[\hat{V}_{t+1}(\hat{\Pi}_{t+1})\mid \hat{\pi}_t, \hat{s}_t^{1:N}, \hat{\gamma}_t^{_01:N}\Big] + L_V\delta\right)\nonumber\\
	&\qquad + (T-t)(\epsilon+L_V\delta)\label{eqn:t4}\\
	=& \hat{Q}_t(\hat{\pi}_t, \hat{\gamma}_t^{_01:N}) + (T-t + 1)(\epsilon+L_V\delta)\label{eqn:t5}\\
	\leq &\hat{Q}_t(\hat{\pi}_t, \hat{\gamma}_t^{_*1:N}) + (T-t + 1)(\epsilon+L_V\delta).\label{eqn:t6}
	\end{align}
	\normalsize
	\else 
	\begin{align} 
	&Q_t(\pi_t, \gamma_t^{_*1:N})\nonumber\\ 
	\leq &\left(\mathbb{E}  \Big[R_{t}(X_{t}, \hat{\gamma}_t^{_01:N}(\hat{s}_{t}^{1: N}))\mid \hat{\pi}_t, \hat{s}_t^{1:N}, \hat{\gamma}_t^{_01:N}\Big]+\epsilon \right)\nonumber\\
	&\qquad +\left(\mathbb{E}  \Big[\hat{V}_{t+1}(\hat{\Pi}_{t+1})\mid \hat{\pi}_t, \hat{s}_t^{1:N}, \hat{\gamma}_t^{_01:N}\Big] + L_V\delta\right) + (T-t)(\epsilon+L_V\delta)\label{eqn:t4}\\
	=& \hat{Q}_t(\hat{\pi}_t, \hat{\gamma}_t^{_01:N}) + (T-t + 1)(\epsilon+L_V\delta)\label{eqn:t5}\\
	\leq &\hat{Q}_t(\hat{\pi}_t, \hat{\gamma}_t^{_*1:N}) + (T-t + 1)(\epsilon+L_V\delta).\label{eqn:t6}
	\end{align}
	\fi  
	Equality~\eqref{eqn:t5} follows from the definition of $\hat{Q}_t(\hat{\pi}_t, \hat{\gamma}_t^{_01:N})$. Inequality~\eqref{eqn:t6} holds because $\hat{\gamma}_t^{_*1:N}$ is optimal to the embedded dynamic program, and hence its $\hat{Q}_t$ value is no smaller than that of $\hat{\gamma}_t^{_01:N}$.
\end{proof}
Our result suggests that the error of carrying out dynamic programming using only the embedded information is upper bounded linearly in time from dynamic programming with the full information. To obtain a small value error upper bound, embedding schemes with small compression errors ($\epsilon$ and $\delta$) should be designed. 

\subsection{Learning an Information State Embedding}\label{sec:learning}
In this subsection, we introduce an empirical instance of the information state embedding based on Recurrent Neural Networks (RNNs), and demonstrate how to learn such an embedding from data. A theoretical upper bound for the $(\epsilon, \delta)$ values of this embedding is generally unclear, as it requires quantification of the expressiveness of the RNNs, which is beyond the scope of this paper. Instead, the embedding is introduced to demonstrate the feasibility of the embed-then-learn framework. We evaluate its performance empirically in the next section. 

The recurrent neural network embedding (RNN-E) uses an RNN to compress the history. In our implementation, each agent uses an LSTM network~\cite{hochreiter1997long} (a variant of RNN) that maps its local history to a fixed-size vector at each time step. We treat the fixed-size hidden state of the LSTM network as our information state embedding. Recursive update is inherent in the structure of LSTM: $(\hat{s}_{t+1}^i,\hat{\pi}_{t+1}) = \text{lstm}_i\left(\hat{s}_t^i, \hat{\pi}_t, y_{t+1}^i, a_t^i, z_{t+1}, \text{Cell}_t^i ; W_i\right)$, where $\text{Cell}_t^i$ is the cell state of the LSTM network that keeps a selective memory of history, and $W_i$ is the network parameters to be learned from data. 

Once we have an embedding, we use deep Q-networks (DQNs)~\cite{mnih2015human} to learn a policy. Following the embed-then-learn procedure, as illustrated in Fig.~\ref{fig:pipeline}, we feed the embedding into a DQN to get the Q-value for each candidate action. In the single agent setting, a similar network structure, named DRQN~\cite{hausknecht2015deep},  concatenates LSTM and DQN, and adopts an end-to-end structure where LSTM directly outputs the Q-values. In contrast, we extract an embedding first so that we can theoretically bound the value function given an upper bound on the embedding error.

Similar to the problem faced by other MARL algorithms, the environment is non-stationary from each agent's perspective since agents learn and update policies concurrently. To address this issue, common training schemes in the literature include centralized training and execution, concurrent learning, and parameter sharing~\cite{gupta2017cooperative}. In our simulations, we adopt parameter sharing as it has demonstrated better performance in~\cite{gupta2017cooperative}. In parameter sharing, homogeneous agents share the same network parameter values, which leads to more efficient training and partly addresses the non-stationarity issue in concurrent learning. Heterogeneous policies are still possible because agents feed unique agent IDs and different local observations into the network. 
However, as a standard training scheme in the literature, parameter sharing does slightly break the assumption of decentralization, as it requires either centralized learning (but still fully decentralized execution), or periodic gradients sharing among agents (which is still a weaker assumption than real-time sharing of local observations).

\section{Numerical Results}\label{sec:numerical}
In this section, we evaluate our embedding scheme on several benchmark problems in the Dec-POMDP literature~\cite{masplan}: Grid3x3corners~\cite{amato2009incremental}, Dectiger~\cite{nair2003taming}, and Boxpushing~\cite{seuken2007improved}. 

We first introduce two heuristics that can also serve to compress the history, but generally do not satisfy our definition of an embedding (more specifically, they are not injective). These two compression heuristics are followed by a DQN to learn a policy in exactly the same way as RNN-E. We will use these two heuristics as baselines to evaluate the performance of our embedding. 

\vspace{3pt}
\noindent\textbf{Finite memory compression.} The first heuristic compression, finite memory compression (FMC), simply maintains a fixed memory, or window, of the local history as an approximate information state.\footnote{Although finite-memory decision making has been well studied in the single agent setting~\cite{white1994finite}, it is still generally open how the truncation of history influences the performance in the multi-agent setting.} Specifically, each agent maintains a one-hot encoded vector of a fixed window of its most recent actions and observations, and its decision only depends on this fixed memory. This compression can be updated recursively via $\hat{s}_{t+1}^i = \left(\hat{s}_{t}^i\backslash \{y_{t-M+1}^i, a_{t-M}^i, z_{t-M+1}\}\right) \cup \{y_{t+1}^i, a_t^i, z_{t+1}\}$, where $M$ is the length of the fixed window. This compression can be regarded as a simplification of~\cite{banerjee2012sample}, where the authors define each complete history sequence to be an information state and perform $Q$-learning~\cite{watkins1992q} on such an information state space. Their method does not scale well to longer horizons due to the explosion of the new state space; in contrast, FMC is more scalable as its size is fixed, but it comes at a price of losing long-term memory.

\vspace{3pt}
\noindent
\textbf{Principal component analysis compression.} The second heuristic compression, principal component analysis compression (PCAC), uses PCA~\cite{pearson1901liii} to reduce the local history to a fixed-size feature vector. PCA is a simple and well-established algorithm for dimensionality reduction. Given a specified dimensionality, PCA keeps the largest variance during compression. Note that this goal differs from our intention of maintaining the predictive capability for decision-making purposes. PCAC cannot be defined a priori and needs to be first trained on a data set. In our implementation, we generate this training set by uniformly randomizing history sequences of the same length. We also note that PCA can be implemented recursively~\cite{li2000recursive} to handle sequential data.

We compare the performance of our RNN-based embedding with the above two compression heuristics, the state-of-the-art planning solution FB-HSVI~\cite{dibangoye2016optimally}, which requires a complete model of the environment, and a learning solution oSARSA~\cite{dibangoye2018learning}. We refer to the performances of FB-HSVI and oSARSA, as reported by their authors. The authors limited the running time of their algorithms to $10^5$ episodes and $5$ hours, but these stopping criteria are not the binding constraints for our solutions, as our algorithm takes significantly less time and fewer episodes to converge.

For RNN-E, we use a one-layer LSTM network with a hidden layer of size $10$ as the embedding network. The inputs to the LSTM are one-hot encoded actions and observations, together with the embedding from the last step. Our DQN is a two-layer fully connected network, where the input is the embedding/compression. The DQN hidden layer has $10$ neurons, and the output size is equal to the size of action space. All activations are Rectified Linear Unit (ReLU)~\cite{glorot2011deep} functions. 

We adopt the $\epsilon$-greedy approach for policy exploration, with $\epsilon$ decreasing linearly from $0.8$ to $0$ over the total $40000$ episodes by default. We use a buffer of size $4000$ for experience replay, and for DQN error estimation we draw a batch of $400$ samples from the buffer. We use Mean Squared Error loss and the Adam optimizer, with a learning rate of $10^{-2}$ by default for both the embedding network and DQN. We perform back-propagation after each episode, and update the target network of DQN every $100$ episodes. The size of the compressed information state for FMC and PCAC is set to $M=4$. For more efficient training, we only train our networks with a horizon length of $10$ and then test on different horizons, which surprisingly achieves comparable performance as training and testing on each possible horizon length separately. We average the performance over $2000$ testing episodes in each run, and all results are averaged over $10$ runs. The performances of the five algorithms over different lengths of horizon $T$ are shown in TABLE~\ref{tbl:exp}. 

\ifx\singlecolumn\undefined 
\begin{table}[!htb]
	\centering
		\begin{tabular}{@{}cccccc@{}}
			\toprule
			& \multicolumn{3}{c}{Parameter Sharing} & \multicolumn{2}{c}{Centralized Solutions}\\
			\cmidrule(lr{0.5em}){2-4} \cmidrule(l{0.5em}){5-6}
			$T$     & RNN-E & FMC & PCAC & oSARSA & FB-HSVI \\ \midrule
			\multicolumn{6}{c}{Grid3x3corners}            \\ \midrule
			6     &   0.86     &   0.83  &0.26  &     1.49       &    1.49     \\
			7     &   1.41     &   1.30  &0.51  &      2.19      &    2.19     \\
			8     &   1.94     &   1.93  &0.72  &     2.95       &    2.96     \\
			9     &   2.69     &   2.53  &1.01  &     3.80       &    3.80     \\
			10    &   3.47     &   3.25   &1.30 &     4.69       &    4.68   
			\\ \midrule
			\multicolumn{6}{c}{Dectiger}                  \\ \midrule
			3     &   4.58     &  4.89  & 0.06  &       5.19     &    5.19     \\
			4     &   2.97     &    3.78  &1.00  &      4.80      &    4.80     \\
			5     &   1.46     &   2.02   &0.65 &      6.99      &    7.02     \\
			6     &   2.50     &   2.95    &0.71&       2.34     &    10.38     \\
			7     &   0.85     &   1.89   &0.53 &      2.25      &    9.99     \\
			   \midrule
			\multicolumn{6}{c}{Boxpushing}                \\ \midrule
			3     &   12.63     &   64.92  &17.81  &     65.27       &    66.08     \\
			4     &   65.06     &   76.83  & 17.76  &    98.16        &    98.59     \\
			5     &   81.51     &   94.22  &34.28  &    107.64        &   107.72      \\
			6     &   91.00     &   97.03  &34.65  &    120.26        &   120.67      \\
			7     &   106.76     &   143.53 & 34.23   &    155.21        &   156.42      \\\bottomrule
		\end{tabular}
	\caption{Performance on classic benchmark problems}
	\label{tbl:exp}
\end{table}
\else 
\begin{table}[!htb]
	\centering
	\resizebox{.6\columnwidth}{!}{%
		\begin{tabular}{@{}cccccc@{}}
			\toprule
			& \multicolumn{3}{c}{Parameter Sharing} & \multicolumn{2}{c}{Centralized Learning}\\
			\cmidrule(lr{0.5em}){2-4} \cmidrule(l{0.5em}){5-6}
			$T$     & RNN-E & FMC & PCAC & oSARSA & FB-HSVI \\ \midrule
			\multicolumn{6}{c}{Grid3x3corners}            \\ \midrule
			6     &   0.86     &   0.83  &0.26  &     1.49       &    1.49     \\
			7     &   1.41     &   1.30  &0.51  &      2.19      &    2.19     \\
			8     &   1.94     &   1.93  &0.72  &     2.95       &    2.96     \\
			9     &   2.69     &   2.53  &1.01  &     3.80       &    3.80     \\
			10    &   3.47     &   3.25   &1.30 &     4.69       &    4.68  
			\\
			\midrule
			\multicolumn{6}{c}{Dectiger}                  \\ \midrule
			3     &   4.58     &  4.89  & 0.06  &       5.19     &    5.19     \\
			4     &   2.97     &    3.78  &1.00  &      4.80      &    4.80     \\
			5     &   1.46     &   2.02   &0.65 &      6.99      &    7.02     \\
			6     &   2.50     &   2.95    &0.71&       2.34     &    10.38     \\
			7     &   0.85     &   1.89   &0.53 &      2.25      &    9.99     
			   \\ \midrule
			\multicolumn{6}{c}{Boxpushing}                \\ \midrule
			3     &   12.63     &   64.92  &17.81  &     65.27       &    66.08     \\
			4     &   65.06     &   76.83  & 17.76  &    98.16        &    98.59     \\
			5     &   81.51     &   94.22  &34.28  &    107.64        &   107.72      \\
			6     &   91.00     &   97.03  &34.65  &    120.26        &   120.67      \\
			7     &   106.76     &   143.53 & 34.23   &    155.21        &   156.42      \\\bottomrule
		\end{tabular}
	}
	\caption{Performance on classic benchmark problems}
	\label{tbl:exp}
\end{table}
\fi  

\ifx\singlecolumn\undefined 
\begin{figure}[!htb]
	\centering\includegraphics[width=.18\textheight]{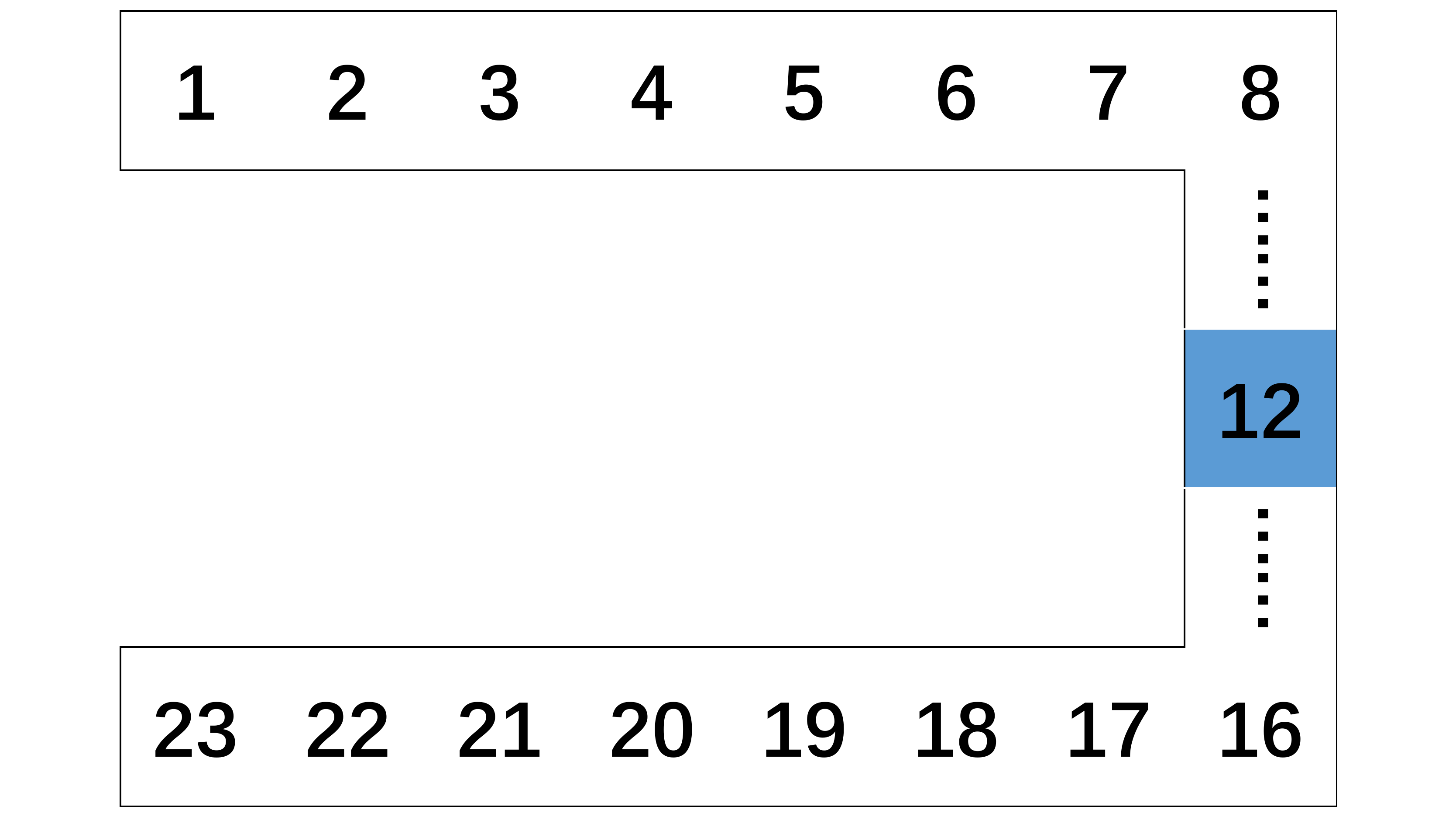}
	\caption{A Dec-POMDP example requiring long-term memory}\label{fig:example}
\end{figure}
\else 
\begin{figure}[!htb]
	\centering\includegraphics[width=.3\textheight]{example.pdf}
	\caption{A Dec-POMDP example requiring long-term memory}\label{fig:example}
\end{figure}
\fi

We can see that RNN-E and FMC achieve high rewards over different horizons. FMC performs better on shorter horizon problems Dectiger and Boxpushing, but RNN-E outperforms FMC on Grid3x3corners, where horizons are longer and long-term memories are necessary. Although FMC achieves good performance in the three examples, we note that it has a very limited scope of application, because it is easy to construct examples where short-term memories are not sufficient for decision making. For example, consider a two-agent Dec-POMDP problem as illustrated in Fig.~\ref{fig:example}, which can be regarded as a modification of Grid3x3corners. Agent 1 starts from state 1 in the maze, and Agent 2 starts from state 23. The goal for the two agents is to meet at the destination state 12 as soon as possible (i.e., they receive a time-discounted unit reward when both of them are in state 12, and no reward otherwise). Candidate actions include moving one step in any of the four directions. They always receive the same observation no matter what states they are in and what actions they take. If an agent runs into a wall, it stays where it is. For each agent, we can see that it is sufficient to only count how many times it has been going right, and the optimal strategy is to switch from going right to going down / up when the count reaches $7$. Now suppose Agent 1 only has a finite memory of length 4. Then this agent performs poorly because it cannot distinguish states 5, 6, 7, and 8. If it decides to deterministically go right, it will get stuck in state 8 forever. If it has some probability of going down, then it wastes time in states 5, 6, and 7. Therefore, finite memory agents obtain low rewards in this example. On the other hand, RNN-E is able to summarize the entire history rather than only keeping a short-term memory, but it comes at a price that RNNs are generally difficult to train mostly due to the vanishing and exploding gradient problems.

PCAC does not perform well on the three tasks. We believe the reason is that PCA is designed to keep the largest possible variance of data, which is generally not the same as the most predictive information of the history as required by Definition~\ref{def:embedding}. The oSARSA algorithm performs comparably well as the planning solution FB-HSVI, and generally outperforms our solutions. This is because oSARSA relies on centralized learning, which is a much stronger assumption than the parameter sharing assumption that our solutions rely on. The centralized scheme of oSARSA also incurs heavy computation, as it requires to solve a mixed-integer linear program at each step.

\section{Concluding Remarks and Future Directions}\label{sec:conclusions}
In this paper, we have introduced the concept of information state embedding for partially observable cooperative MARL. We have theoretically analyzed how the compression error of the embedding influences the value functions. We have also proposed an instance of embedding based on RNNs, and empirically evaluated its performance on partially observable MARL benchmarks. An interesting future direction would be to theoretically analyze the compression errors of the embedding/compression strategies we have used, which helps close the loop of our theoretical analysis.  It would also be interesting to design other empirical embeddings that explicitly reduce this compression error.

	\balance 
	\bibliographystyle{IEEEtran}
	\bibliography{ref}

\end{document}